\documentclass[onecolumn]{IEEEtran}
\usepackage[numbers]{natbib}
\usepackage{cite}
\usepackage{hyperref}

\usepackage{color}
\usepackage{amsfonts,amsmath,amsthm}

\usepackage{algorithmic}
 
\usepackage{multirow}

\newtheorem{lemma}{Lemma} 
\newtheorem{theorem}{Theorem} 
\newtheorem{definition}{Definition} 
\newtheorem{corollary}{Corollary}

\newcommand{\opt}{\mathrm{opt}}
\newcommand{\non}{\mathrm{non}}
 
\hyphenation{op-tical net-works semi-conduc-tor}
\begin{document}

\title{Mixed Strategy May Outperform Pure Strategy: An Initial Study}
\author{Jun He,
\thanks{Jun He   is with Department of Computer Science, Aberystwyth University, Ceredigion, SY23 3DB, UK. Email:  jun.he@aber.ac.uk.}
    \and Wei Hou,  Hongbin Dong, 
    \thanks{Wei Hou and Hongbin Dong are with College of Computer Science and Technology, Harbin Engineering University, Harbin, 150001, China}
   \and Feidun He  
  \thanks{Feidun He is with School of Information Science and Technology, Southwest Jiaotong University, Chengdu, Sichuan, 610031, China}   }

\maketitle

\begin{abstract}
A pure strategy  metaheuristic is one that applies the same search method at each generation of the algorithm. A mixed strategy  metaheuristic is one that selects a search method probabilistically from a set of strategies at each generation. For example, a classical genetic algorithm, that applies mutation with probability 0.9 and crossover with probability 0.1, belong to mixed strategy heuristics. A (1+1) evolutionary algorithm using mutation but no crossover is a pure strategy metaheuristic. The purpose of this paper is to compare the performance between mixed strategy    and  pure strategy metaheuristics.  
The main results of the current paper are summarised as follows.  (1)  We construct two novel mixed strategy evolutionary algorithms for solving the 0-1 knapsack problem. Experimental results show that the mixed strategy algorithms may find better solutions than pure strategy algorithms in up to   77.8\% instances through experiments.   (2) We establish  a sufficient and  necessary condition when  the expected runtime time of mixed strategy metaheuristics is smaller that that of pure strategy mixed strategy metaheuristics.
\end{abstract}

 \section{Introduction}
In the last three decades, metaheuristics  have been widely applied in solving combinatorial optimisation problems \citep{glover2003handbook,gendreau2010handbook}.   Metaheuristics include, but  are not restricted to, Ant Colony Optimization (ACO),  Genetic Algorithms (GA), Iterated Local Search (ILS), Simulated
Annealing (SA), and Tabu Search (TS)  \citep{blum2003metaheuristics}. Different search strategies have been developed in  these metaheuristics. Each search strategy has its own advantage.   Therefore it is a natural idea to combine the advantages of several search strategies together. This leads to hybrid metaheuristics~\citep{blum2008hybrid} such as hyper-heuristic \citep{burke2003hyper} and memetic algorithm \citep{neri2011handbook}.

Mixed strategy metaheuristics~\citep{he2005game} belong to the family of hybrid metaheuristics. They are inspired from    the game theory \citep{dutta1999strategies}. A pure strategy  metaheuristic is one that applies the same search method at each generation of the algorithm. A mixed strategy  metaheuristic is one that selects a search method probabilistically from a set of strategies at each generation. For example, a search strategy may be mutation  or crossover. Thus a classical genetic algorithm, that applies mutation with probability 0.9 and crossover with probability 0.1, belong to mixed strategy heuristics. A (1+1) evolutionary algorithm using mutation but no crossover is a  pure strategy metaheuristic.  Previously mixed strategy evolutionary programming,   integrating several mutation operators, has been designed for numerical optimization \citep{dong2007evolutionary}. Experimental results show that  mixed strategy evolutionary programming outperforms   pure strategy evolutionary programming with  a single mutation operator~\citep{shen2010mixed}.

The first goal of this paper is to conduct an empirical comparison of the performance between mixed strategy and pure strategy evolutionary algorithms (EAs for short) on the 0-1 knapsack problem.  Here the performance is measured by the best  fitness value found in 500 generations. In experiments, two novel mixed strategy EAs are proposed to solve the 0-1 knapsack problem. 

The second but more important goal  is to provide a theoretical answer to the question:   when do mixed strategy metaheuristics  outperform  pure strategy metaheuristics?  In theoretical analysis, the performance of a metaheuristic is measured by the expected number of total fitness evaluations to find an optimal solution (called the \emph{expected runtime}).  

Despite the popularity of hybrid metaheuristics in practice,   the theoretic work on   hybrid metaheuristics is very limited \citep{lehre2013runtime}.  One result is based on the asymptotic convergence rate  \citep{he2012pure}. The  asymptotic convergence rate is how fast an iteration algorithm converge to the solution per iteration \citep{varga2009matrix}.  It is proven in \citep{he2012pure} that any mixed strategy (1+1) EA (consisting of several mutation operators) performs no  worse than   the worst pure strategy  EA (using a single mutation operator). If mutation operators are mutually complementary, then it is possible to design a   mixed strategy (1+1) EA  better than  the  best pure strategy (1+1) EA.

Another result  is based on the runtime analysis of selection
hyper-heuristics \citep{lehre2013runtime}.  It  shows that mixing different
neighbourhood or move acceptance operators can be more
efficient than using stand-alone individual operators in some cases. But the discussion is restricted to simple (1+1) EAs for simple problems such as the OneMax and GapPath functions. 
 
This paper is different from our previous work \citep{he2012pure} in two points.  The expected runtime is employed to theoretically measure the performance of metaheuristics, while the asymptotic  convergence rate is used in \citep{he2012pure}.  The current paper discuss  population-based metaheuristics while \citep{he2012pure} only analysed (1+1) EAs.

The rest of this paper is organized as follows. Section \ref{secExperiment} gives experimental results that show  mixed strategy may outperform pure strategy. Section \ref{secTheory}  provides the sufficient and necessary condition when mixed strategy may outperform pure strategy  in general. Section \ref{secConclusions} concludes the   paper.

 
\section{Evidence from Experiment: Mixed Strategy May Outperform Pure Strategy}
\label{secExperiment}
This section conducts an empirical comparison of  the performance between  mixed strategy EAs and pure strategies EAs. A classical NP-hard problem, the 0-1 knapsack problem~\citep{martello1990knapsack}, is used in  the empirical study. 
\subsection{Evolutionary Algorithms for the 0-1 Knapsack Problem}
The 0-1 knapsack problem  is described as follows:
\begin{equation}
\begin{array}{lll}
 \mbox{maximize } & \sum^n_{i=1} v_i x_i, \nonumber\\
 \mbox{subject to }&\sum^n_{i=1} w_i x_i \le C, \nonumber
\end{array}
\end{equation}
where $v_i>0$ is the value of item $i$,  $w_i>0$ the weight of
item $i$, and $C>0$ the knapsack capacity. $$ x_i =
\left\{\begin{array}{llll} 1 & \mbox{if item $i$ is
selected in the knapsack}, \\
0 & \mbox{otherwise}.
\end{array}
\right.
$$
A solution  is represented by a vector (a binary string) $\vec{x}=(x_1, \cdots, x_n)$. If a solution $\vec{x}$ violates the constraint, then it is called infeasible. Otherwise it is called feasible.
 
There are several ways to handle the constrains in the knapsack problem \citep{michalewicz1996genetic}. The method  of  repairing infeasible solutions is used in the paper since it is more efficient than other methods \citep{he2007comparison}.  Its idea is simple:  if an infeasible solution is generated, then it will be
repaired to a feasible solution. The repairing
procedure is described  as follows:
 
\begin{algorithmic} 
\STATE \textbf{input} $\vec{x}$;
\IF{$\sum^n_{i=1} x_i w_i >C$}
\STATE $\vec{x}$ is infeasible;
\WHILE{($\vec{x}$ is infeasible)}
\STATE $i=:$ \textbf{select} an item from the knapsack;
\STATE set $ x_i=0$;
\IF{$\sum^n_{i=1} x_i w_i \le C$}
\STATE $\vec{x}$ is feasible;
\ENDIF
\ENDWHILE
\ENDIF
\STATE \textbf{output} $\vec{x}$.
\end{algorithmic}

There are different \textbf{select} methods in the repairing procedure. Two of them are described as follows.
\begin{enumerate}
\item
\emph{Random repair:}     select an item from the knapsack at random and remove it from the knapsack.

\item
\emph{Greedy repairing:}
sort all items  according to the order of the ratio $ {v_i}/{w_i}$,  then  select the   item with the smallest ratio and remove it from the knapsack.
\end{enumerate}

The fitness function is defined as
$$
f (\vec{x})=    \sum^n_{i=1} x_i v_i, \mbox{ if $\vec{x}$ is feasible,}  
$$
Thanks to the repairing method, no need to define the fitness for infeasible solutions.

A pure strategy EA  for solving the 0-1 knapsack problem  is described  as follows.
\begin{algorithmic} 
\STATE \textbf{input} a fitness function;
\STATE generation counter $ t\leftarrow 0$;
\STATE initialize population $ \Phi_0$;
\STATE an archive   keeps the best solution in $\Phi_0$; 
\WHILE{$t$ is less than a threshold}
\STATE children population $\Phi_{t+1/2}\leftarrow$   mutated from $\Phi_t$; \IF{a child is an infeasible solution}
\STATE then repair it into a feasible solution;
\ENDIF
\STATE new population $\Phi_{t+1}\leftarrow$   selected from $ \Phi_t, \Phi_{t+1/2}$;
\STATE  update the archive  if the best solution in $\Phi_{t+1}$ is better than it; 
\STATE $t\leftarrow t+1$;
\ENDWHILE \STATE \textbf{output} the maximum of the fitness function.
\end{algorithmic}

A mixed strategy EA  for solving the 0-1 knapsack problem  is  almost the same as the above algorithm, except one place: 
\begin{algorithmic} 
\STATE choose a mutation operator probabilistically;
\STATE children population $\Phi_{t+1/2}\leftarrow$    children mutated from $\Phi_t$. 
\end{algorithmic}
  The description of mutation operators is given in the next subsection.  The selection operator is the same in  pure strategy  and mixed strategy EAs. A  mixed strategy  then means a probability distribution of choosing  mutation operators.

\subsection{Pure Strategy and Mixed Strategy Evolutionary Algorithms}
Four pure strategy EAs are constructed based on four different mutation operators. 
The first  mutation operator is standard bitwise mutation. It is  independent on the 0-1 knapsack problem. The related  EA is  denoted by PSb.

\begin{itemize}
\item \emph{Bitwise Mutation.}  Flip each bit $x_i$ to $1-x_i$ with probability $\frac{1}{n}$. 
\end{itemize}

The second mutation operator is  problem-specific. It is based on   heuristic knowledge:  an item with a bigger value is more likely to appear in the knapsack. The  related EA is  denoted by PSv.

\begin{itemize}
\item \emph{Mutation based on values.} 
If a bit $x_i=0$, then flip it to $1$ with probability
\begin{align}
\frac{v_i}{\sum^n_{j=1} v_j}.
\end{align}
If a bit $x_i=1$, then flip it to $0$ with probability
\begin{align}
\frac{1/v_i}{\sum^n_{j=1} 1/v_j}.
\end{align}

 \end{itemize}

The third   mutation operator is based on  heuristic knowledge too: an item with a smaller weight is more likely to appear in the knapsack.  The corresponding EA is  denoted by PSw.

\begin{itemize}
\item \emph{Mutation based on weights.}  
If a bit $x_i=0$, then flip it to $1$ with probability
\begin{align}
\frac{1/w_i}{\sum^n_{j=1} 1/w_j}.
\end{align}
If a bit $x_i=1$, then flip it to $0$ with probability
\begin{align}
\frac{w_i}{\sum^n_{j=1} w_j}.
\end{align}

\end{itemize}

The fourth mutation operator is constructed from heuristics knowledge: 
first  calculate the ratio between the value and weight for each item:
\begin{align}
r_i =\frac{v_i}{w_i}.
\end{align}
Then an item with a bigger ratio  is more likely to appear in the knapsack. The related EA is  denoted by PSr.

\begin{itemize}
\item \emph{Mutation based on the ratio between value  and weight.} 
If a bit $x_i=0$, then flip it to $1$ with probability
\begin{align}
\frac{r_i}{\sum^n_{j=1} r_j}.
\end{align}
If a bit $x_i=1$, then flip it to $0$ with probability
\begin{align}
\frac{1/r_i}{\sum^n_{j=1} 1/r_j}.
\end{align}

\end{itemize}

Two novel mixed strategy EAs are designed in the experiments. One is to set  a fixed probability distribution of choosing  mutation operators for all generations.  The algorithm is called \emph{static}, denoted by MSs.

\begin{itemize}
\item \emph{statically mixed strategy:}    choose each mutation
operator based on a fixed probability, for example,  $(0.25, 0.25, 0.25, 0.25)$  for the  four pure strategies.
\end{itemize}

The other is to dynamically adjust the   probability distribution of choosing mutation operators. If a better solution is generated by applying a mutation operator this generation, then  the operator will be chosen with a higher probability. This kind of mixed strategy EAs  is called \emph{dynamic}, denoted by MSd.

\begin{itemize}
\item \emph{dynamically mixed strategy:}  The updating procedure of the mixed strategy is the same as that in \citep{dong2007evolutionary}.   
For each individual    in
population $\Phi_{t+1}$, update its mixed strategy as follows.  If the   individual's parent generates a child  via mutation PS and the child is selected into  population $\Phi_{t+1}$, then assign the probabilities of choosing mutation PS and other mutation PS' to be 
$$
\begin{array}{llll}
 P_{t+1}(PS)= P_{t}(PS) +\frac{1-P_{t}(PS)}{4},   \\
  P_{t+1}(PS')= P_t (PS') - \frac{P_{t}(PS')}{4}, \quad 
PS' \neq PS,
\end{array}
$$

Otherwise assign 
$$
\begin{array}{llll}
P_{t+1}(PS)= P_{t}(PS)  - \frac{P_{t}(PS)}{4},   \\
P_{t+1}(PS')= P_t (PS') + \frac{1- P_{t}(PS')}{4}, \quad 
PS' \neq PS,
\end{array}
$$
 
\end{itemize}

\subsection{Experiments}
Experiments are conducted on different types of instances of the 0-1 knapsack problem.
According to the correlation between values and weights,  the instances of the problem are classified into three types~\citep{martello1990knapsack,michalewicz1996genetic}: given two positive parameters $A $ and $B$, 
\begin{enumerate}
\item {\it uncorrelated knapsack}: $v_i$ and $w_i$ uniformly random in $[1, A ]$;

\item {\it weakly correlated knapsack}: $w_i$ uniformly random in $[1,
A ]$; and $v_i$ uniformly random in $[w_i-B, w_i+B]$ (if for some
$j$,  $v_i \le 0$, then the random generation procedure should be
repeated until $v_i>0$);

\item {\it strongly correlated knapsack}: $w_i$ uniformly random in $[1,
A ]$; and $v_i= w_i+B$;
\end{enumerate}

In the experiments, $A $ and $r$ are set to be
$
A  =\frac{n}{20}$ and $B=\frac{n}{20}$.

Based on the capacity,  the instances of the knapsack problem are  classified into two
types~\citep{martello1990knapsack,michalewicz1996genetic}:
\begin{enumerate}
\item
{\it restrictive capacity knapsack:} the knapsack capacity  is
small, where $  C =2A .$ 

\item
{\it average capacity knapsack:}  the knapsack  capacity   is large,
where $ C =0.5 \sum^n_{i=1} w_i.$ 
\end{enumerate}

Hence we will compare two mixed strategy EAs and four pure strategy EAs on six different types of instances below:
\begin{enumerate}
\item uncorrelated  and restrictive capacity knapsack,
\item  weakly correlated and restrictive capacity knapsack,
\item  strongly correlated  and restrictive capacity knapsack,
\item uncorrelated  correlated  and average capacity knapsack,
\item  weakly correlated  and average capacity knapsack,
\item  strongly correlated  and restrictive average capacity knapsack.
\end{enumerate}

Furthermore the experiments are split into two groups based on the repairing method: (1)    greedy repair, (2)   random repair.

The experiment setting is described as follows. For each type of the 0-1 knapsack problem, three instances with  100, 250 and 500 items are generated at random.
The population size is set to 10. The maximum of generations is  500. The initial population is chosen at random.  

Tables \ref{result1} and \ref{result2}  give the experimental results. The number  in the table is the best fitness values found in 500 generations. It is averaged over 10 independent runs for each instance.

   \begin{table}[ht]
 \begin{center}
  \caption{greedy repair: the best fitness value found in 500 generations, averaged over 10 runs for each instance}
      \label{result1}
 \begin{tabular}{ccccccc}\hline 
\multicolumn{7}{c}{uncorrelated and restrictive capacity knapsacks }\\   
 \hline  
 $n$ &   MSs &   MSd &  PSb  &  PSv  &  PSw  &  PSr\\ 
  \hline 
 100 & \textit{285}& \textbf{300}  & 279  & 281  & 283 & 277 \\ 
 250 & \textit{1609} & \textbf{1655}  & 1601 & 1539 & 1528 & 1513\\ 
500 & 5625 & 5703  & 5515  & \textbf{5794} & 5140  & 5504 \\ 
   \hline 
\multicolumn{7}{c}{weakly correlated and restrictive capacity knapsacks  }\\    
 \hline 
 $n$ &  MSs & MSd &  PSb  &  PSv  &  PSw  &  PSr\\ 
 \hline
 100 & \textit{342} & \textbf{353}  & 331  & 289 & 306  & 325\\ 
 250 & 1651  & \textbf{1695}  & 1583 & 1514 & 1668 & 1650\\ 
 500 & \textit{5319}  & \textbf{5545}  & 5161 & 4595  & 4810 & 4710\\ 
 \hline
\multicolumn{7}{c}{strongly correlated  and restrictive capacity knapsacks  }\\    
 \hline 
 $n$ &  MSs & MSd &  PSb  &  PSv  &  PSw  &  PSr\\ 
  \hline 
 100 & 671  & \textbf{683} & 678 & 662  & 655 & 665 \\ 
 250 & 4126  & 4261 & 4212 & 4170 & \textbf{5023}  & 3980\\ 
 500 & 15273 & \textbf{15537}  & 14959  & 15226  & 15367 & 14179 \\ 
 \hline 
\multicolumn{7}{c}{uncorrelated and average capacity knapsacks  }\\   
 \hline 
 $n$ &  MSs & MSd &  PSb  &  PSv  &  PSw  &  PSr\\ 
  \hline 
 100 &  295   & \textbf{299}  & 293 & 292 & 288 & 295\\ 
 250 & 1616 & \textbf{1650}  & 1616  & 1619 & 1583 & 1609 \\ 
 500 & 5751 & 5958  & 5670 & \textbf{5963}  & 5601  & 5663\\ 
  \hline 
\multicolumn{7}{c}{weakly correlated and  average capacity knapsacks  }\\  
 \hline  
 $n$ &  MSs & MSd &  PSb  &  PSv  &  PSw  &  PSr\\ 
  \hline 
 100 & \textit{387}  & \textbf{395} & 355 & 344  & 362  & 349\\ 
 250 & 1976  & \textbf{2014} & 2009 & 1924 & 1997  & 1956\\ 
 500 & \textit{7284} & \textbf{7505} & 6839 & 6966  & 7048 & 7049\\ 
  \hline
\multicolumn{7}{c}{strongly correlated  and  average capacity knapsacks  }\\   
 \hline 
 $n$ &  MSs & MSd &  PSb  &  PSv  &  PSw  &  PSr\\ 
  \hline 
 100 & 716 & \textbf{730} & 718  & 721  & 714 & 716 \\ 
 250 & \textit{4372} & \textbf{4409} & 4301 & 4220 & 4135  & 4218\\ 
 500 & 15525  & \textbf{15868}  & 15746 & 15819 & 15552  & 14828\\ 
\hline
 \end{tabular}
 \end{center}
 \end{table}

     \begin{table}[ht]
 \begin{center}
  \caption{random repair: the best fitness value found in 500 generations, averaged over 10 runs for each instance}
  \label{result2}
 \begin{tabular}{ccccccc}\hline 
\multicolumn{7}{c}{uncorrelated and restrictive capacity knapsacks  }\\   
 \hline  
 $n$ &   MSs &   MSd &  PSb  &  PSv  &  PSw  &  PSr\\ 
  \hline 
100 &  \textit{167}  & \textbf{170} & 161 & 160 & 155 & 166\\ 
250 & 850 & \textbf{876} & 852 & 842 & 810  & 846\\ 
500 & \textit{2550} & \textbf{2675} & 2440  & 2513 3 & 2496 & 2426\\ 
   \hline 
\multicolumn{7}{c}{weakly correlated and restrictive capacity knapsacks  }\\  
 \hline 
 $n$ &  MSs & MSd &  PSb  &  PSv  &  PSw  &  PSr\\ 
 \hline
100 &  \textit{236} & \textbf{242} & 230 & 229 & 226 & 222 \\ 
250 & 1066  & \textbf{1134}  & 1046 & 1058  & 1098  & 1067 \\ 
500 & \textit{3947}  & \textbf{4071} & 3719  & 3741 & 3713 & 3815\\ 
   \hline
\multicolumn{7}{c}{strongly correlated  and restrictive capacity knapsacks   }\\    
 \hline 
 $n$ &  MSs & MSd &  PSb  &  PSv  &  PSw  &  PSr\\ 
 \hline
 100 & 405 & \textbf{416 }& 403 &  405 & 389& 408 \\ 
  250 & 2171 & 2204 & 2188 & \textbf{2273} & 2138 & 2205 \\ 
500 & \textit{7028} &	\textbf{7078} &	6981 &	6883 &	6958 &	6946  \\ 
 \hline
\multicolumn{7}{c}{uncorrelated   and average capacity knapsacks }\\     
 \hline 
 $n$ &  MSs & MSd &  PSb  &  PSv  &  PSw  &  PSr\\ 
  \hline 
100 & 225 & \textbf{234} & 218 & 231 & 212  & 231 \\ 
250 & 1236  & \textbf{1266} & 1197  & 1208  & 1070  & 1263\\ 
500 & 4669 & 4697  & 4443 & 4674 & 3922  & \textbf{4716} \\ 
   \hline 
\multicolumn{7}{c}{weakly correlated   and average capacity knapsacks }\\    
 \hline  
 $n$ &  MSs & MSd &  PSb  &  PSv  &  PSw  &  PSr\\ 
  \hline 
100& 295  & \textbf{311} & 290 & 294 & 292 & 304 \\ 
250& \textit{1520}  & \textbf{1530} & 1497 & 1491 & 1374  & 1519\\ 
500& 5641 & \textbf{5769} & 5300  & 5525  & 4999 & 5710\\ 
    \hline
\multicolumn{7}{c}{strongly correlated   and average capacity knapsacks   }\\    
 \hline 
 $n$ &  MSs & MSd &  PSb  &  PSv  &  PSw  &  PSr\\ 
  \hline 
 100 &476 & \textbf{493} & 479 & 483& 470 & \textbf{493}\\ 
250 &2650 & 2716 & 2613 & 2610  & 2586  & \textbf{2721} \\ 
500 &10156 & 10285  & 10119 & 10131  & 10065 & \textbf{10393}\\ 
\hline
 \end{tabular}
 \end{center}
 \end{table}
 
  Following a simple calculation, we see that  the dynamically mixed strategy EA, MSd,  is the best in 77.8\% instances   and   equally the best in 2.8\% instances.  If we compare the statically mixed strategy EA with the four pure strategies, then we see that MSs is better than the four pure strategy EAS in 36.1\% instances (marked in italic type in the tables).  
  
Experimental results show  mixed strategy EAs outperform pure strategy EAs in up tp 77.8\% instances, but not always. Naturally it raises the question: under what condition, a mixed strategy EA may  outperform a pure strategy EA. This question is seldom answered rigorously before.  
  
\section{Support of Theory: Mixed Strategy May Outperform Pure Strategy}
In this section, we conduct a theoretical comparison of the performance between mixed strategy metaheuristics and pure strategy metaheuristics.  

\label{secTheory}
\subsection{Meta-heuristics and Markov Chains} 
Without lose of generality,  consider the  problem of maximising a fitness function:
\begin{equation}
   \mbox{maximize }   f(x),    
\end{equation}
where $x$ is a variable and   its definition domain is a finite set.  
 
The metaheuristics considered in the paper are formalised as Markov chains. Initially construct a population  of solutions $\Phi_0$;  from   $\Phi_0$, then generate a new population  of solutions $\Phi_1$;  from $\Phi_1$, then  generate a new population of solutions $\Phi_2$, and so on. This procedure is repeated until a stopping condition is satisfied. A sequence of populations is then generated
$$\Phi_0 \to \Phi_1 \to \Phi_2 \to \cdots.$$
An archive  is used for recording the best found solution so far.   The archive is not involved in  generating  a new  population.  In this way, the best found solution is preserved for ever (called \emph{elitist}). 
The metaheuristics algorithm with an archive  is described below. In the algorithm,   the number of fitness evaluations for each generation is invariant. 
 
\begin{algorithmic} 
\STATE set counter  $t$ to 0; 
\STATE initialize a population  $ \Phi_{0}$;
\STATE an archive   keeps the best solution in $\Phi_0$; 
\WHILE{the archive is not an optimal solution}
\STATE a new population  $\Phi_{t+1}$ is generated from $\Phi_t$;
\STATE  update the archive  if the best solution in $\Phi_{t+1}$ is better than it; 
\STATE  counter  $t$ is increased by $1$;
\ENDWHILE
\end{algorithmic}
 
The procedure of generating $\Phi_{t+1}$ from $\Phi_t$ can be  represented by  transition probabilities among populations:  
\begin{align}
P(X,Y):=P(\Phi_{t+1}= Y \mid  \Phi_t = X),\end{align}
where populations $\Phi_t, \Phi_{t+1}$ are variables and $X,Y$ are their values (also called states).  The transition probabilities $P(X,Y)$ form the transition matrix of a Markov chain, denoted by $\mathbf{P}$. 

\begin{definition} If a  transition matrix $\mathbf{P}$ for generating new populations is independent of $t$, then it is called a \emph{pure strategy}.
A \emph{mixed strategy}  is a   probability distribution of choosing a  pure strategy from a set of strategies.
\end{definition}

In theory, the stopping criterion is that  the algorithm halts once an optimal solution is found. It is taken for the convenience of analysing  the first time of finding an optimal solution (called \emph{hitting time}). If $\Phi_t $ includes an optimal solution, then assign
$$\Phi_t=    \Phi_{t+1}= \Phi_{t+2} =\cdots$$    for ever. As a result, the population   sequence $\{\Phi_t\}$   is formulated by a homogeneous Markov chain \citep{he2003towards}.

Since a state in the optimal set  is always absorbing, so the  transition matrix $\mathbf{P} $  can be written in the canonical form,
\begin{equation}
\label{equCanonicalForm} 
\mathbf{P} =  \begin{pmatrix} 
  \mathbf{I} & \mathbf{{O}} \\
   \mathbf{R}  & \mathbf{Q} 
 \end{pmatrix},
\end{equation}
where  
$\mathbf{I}$ is a unit matrix,   $\mathbf{O}$ a zero matrix and $ \mathbf{Q}$  a matrix for    transition probabilities among non-optimal  populations.
  $\mathbf{R}$ denotes the transition  probabilities from non-optimal populations to  optimal populations.

Let   $m(X)$ denote the expected number of generations needed to find an optimal solution when  $\Phi_0$ is at state  $X$ for the first time (thereafter it is abbreviated by the \emph{expected hitting time}). Clearly for any initial population $X$ in the optimal set,    $m(X)$ is $0$. 
Let   $( X_1, X_2, \cdots   )$ represent all populations in the non-optimal set and   the vector ${\vec{m}}$   denote    their expected hitting times  respectively
$$\vec{m}  = (m(X_1), m(X_2), \cdots   )^T.$$

Since the number of fitness evaluations for each generation is invariant,  the total number of fitness evaluations (called \emph{runtime}) equals to the expected hitting time $\times$ the number of fitness evaluations of a generation. 

The following theorem  \citep[Theorem 11.5]{grinstead1997introduction} shows that the expected hitting time can be calculated from the transition matrix. 
\begin{theorem} [Fundamental Matrix Theorem] 
\label{lem1} 
The expected hitting time  is given by
\begin{align}
\vec{m}=(\mathbf{I-Q})^{-1}\vec{1},
\end{align}
where $\vec{1}$ is a  vector all of whose entries are $1$,  the matrix $\mathbf{N}=(\mathbf{I}-\mathbf{Q})^{-1}$ is called the \emph{fundamental matrix}.
\end{theorem}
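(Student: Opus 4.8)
The plan is to derive a linear system for the vector of expected hitting times by conditioning on the first generation, and then to solve it by inverting $\mathbf{I}-\mathbf{Q}$. First I would fix a non-optimal state $X_i$ and carry out a one-step analysis. Starting from $X_i$, exactly one generation is consumed, after which the chain lands in some state $Y$ with probability $P(X_i,Y)$. If $Y$ belongs to the optimal (absorbing) set, the search terminates with no further generations required; if $Y=X_j$ is again non-optimal, then by the Markov property the expected number of remaining generations is precisely $m(X_j)$. Since the row sums of $\mathbf{P}$ equal $1$, the absorbing transitions collected in $\mathbf{R}$ and the transient transitions collected in $\mathbf{Q}$ together account for the single generation that is always consumed, which yields the recurrence
$$m(X_i) = 1 + \sum_j Q(X_i,X_j)\, m(X_j).$$
Collecting these identities over all non-optimal states produces the matrix equation $(\mathbf{I}-\mathbf{Q})\vec{m}=\vec{1}$.

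It then remains to show that $\mathbf{I}-\mathbf{Q}$ is invertible, which I expect to be the main obstacle. The crucial observation is that every non-optimal population reaches the optimal set with probability $1$, so the probability of still being among the non-optimal states after $t$ generations, namely the $i$-th entry of $\mathbf{Q}^t\vec{1}$, tends to $0$ as $t\to\infty$. Hence $\mathbf{Q}^t\to\mathbf{O}$, which forces the spectral radius of $\mathbf{Q}$ to be strictly less than $1$. Consequently $1$ is not an eigenvalue of $\mathbf{Q}$, the matrix $\mathbf{I}-\mathbf{Q}$ is nonsingular, and the Neumann series $\sum_{k=0}^{\infty}\mathbf{Q}^k$ converges to the fundamental matrix $\mathbf{N}=(\mathbf{I}-\mathbf{Q})^{-1}$. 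The care needed here is to justify $\mathbf{Q}^t\to\mathbf{O}$ from the transient nature of the non-optimal states rather than merely assuming it.

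Granting invertibility, the conclusion is immediate: multiplying $(\mathbf{I}-\mathbf{Q})\vec{m}=\vec{1}$ on the left by $\mathbf{N}$ gives $\vec{m}=(\mathbf{I}-\mathbf{Q})^{-1}\vec{1}$, as claimed. As an alternative route that sidesteps the recurrence, I could write the hitting time $T$ as $m(X_i)=\sum_{t\ge 0}\Pr(T>t\mid\Phi_0=X_i)$ and observe that $\Pr(T>t\mid\Phi_0=X_i)=(\mathbf{Q}^t\vec{1})_i$, since the optimal states are absorbing and so being non-optimal at generation $t$ is equivalent to not yet having been absorbed. Summing the series $\sum_{t\ge 0}\mathbf{Q}^t\vec{1}$ then reproduces $(\mathbf{I}-\mathbf{Q})^{-1}\vec{1}$ directly, with the same convergence argument for $\mathbf{Q}^t\to\mathbf{O}$ supplying the only nontrivial ingredient.
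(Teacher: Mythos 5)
Your proof is correct, but there is nothing in the paper to compare it against: the paper does not prove this theorem at all, it simply quotes it from Grinstead and Snell (their Theorem 11.5). So the relevant comparison is with the textbook argument. Your first route --- the first-step recurrence $m(X_i)=1+\sum_j Q(X_i,X_j)m(X_j)$, i.e.\ $(\mathbf{I}-\mathbf{Q})\vec{m}=\vec{1}$, followed by invertibility of $\mathbf{I}-\mathbf{Q}$ --- is organized differently from the textbook proof, which first shows that the entries of $\mathbf{N}=\sum_{k\ge 0}\mathbf{Q}^k$ count expected visits to transient states and then sums the rows of $\mathbf{N}$; your alternative route via the tail sum $m(X_i)=\sum_{t\ge 0}\Pr(T>t\mid \Phi_0=X_i)=\sum_{t\ge 0}(\mathbf{Q}^t\vec{1})_i$ is essentially that textbook argument. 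Both are sound, and supplying either one fills a gap the paper leaves open. One point should be sharpened, however: you call it the ``crucial observation'' that every non-optimal population reaches the optimal set with probability $1$, and you say this should be justified rather than assumed. In the paper's abstract framework it cannot be justified, because nothing in the definition of a pure strategy guarantees that the optimal set is reachable from every state (a strategy whose transition matrix is the identity has $\mathbf{Q}=\mathbf{I}$, all hitting times infinite, and $\mathbf{I}-\mathbf{Q}$ singular). Absorption with probability $1$ --- equivalently, spectral radius $\rho(\mathbf{Q})<1$ --- must therefore be taken as a standing hypothesis, implicit in the theorem's claim that $(\mathbf{I}-\mathbf{Q})^{-1}$ exists. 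Once it is assumed, your chain of deductions ($\mathbf{Q}^t\to\mathbf{O}$, hence $\rho(\mathbf{Q})<1$, hence the Neumann series converges to $\mathbf{N}$) is fine; and if instead one assumes the weaker, checkable condition that every non-optimal state merely \emph{can} reach the optimal set, the standard quantitative argument (from every state, absorption occurs within some $N$ steps with probability at least $p>0$, so $\Pr(T>kN)\le(1-p)^k$) yields $\mathbf{Q}^t\to\mathbf{O}$ without any circularity.
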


Two special values of the expected hitting time are often used to evaluate the performance of metaheuristics. 
The first value is the average of the expected hitting time,  given by 
\begin{align}
\bar{m}= \frac{1}{\mid \mathcal{S} \mid} \sum_{X \in \mathcal{S}} m(X). 
\end{align}
where $\mathcal{S}$ denotes the set of all populations. The average corresponds to the case when the initial population is chosen at random.

The second value is the maximum of the expected hitting time, given by
\begin{align}
 \max \{m(X); X \in \mathcal{S} \},
\end{align} 
The maximum  corresponds to the case when the initial population is chosen at the worst state.

The population set $\mathcal{S}$ is divided into two parts:   $\mathcal{S}_{\non}$ denotes  the set of all populations which don't include any optimal solution and $\mathcal{S}_{\opt}$  the set of all populations which include at least one  optimal solution.

\subsection{Drift Analysis} 
Drift analysis is used  for bounding the expected hitting time  of metaheuristics~\citep{he2001drift}.
In drift analysis, a \emph{distance function} $d(X)$ is a non-negative function. 
Let  $(X_1, X_2, \cdots)$   represent  all populations in the non-optimal set and  $\vec{d}$ denote  the vector
$$(d(X_1), d(X_2), \cdots)^T.$$ 

\begin{definition} Let $\mathbf{P}$ be the Markov chain associated with ametaheuristic and $d(X)$ be a distance function. For a  non-optimal population $X$, the \emph{drift}   at state $X$   is  
$$
\Delta (X):=d(X)-\sum_{Y \in \mathcal{S}_{\non}}  d(Y)  P(X,Y).
$$
\end{definition}
The drift represents the one-step progress rate towards the global optima. 
Since the Markov chain $\{\Phi_t; t=0,1, \cdots \}$ is homogeneous, the above  drift is independent of $t$.

The following   theorem  is a variant of the original drift theorem \citep[Theorems 3 and 4]{he2003towards}. 

\begin{theorem} [Drift Analysis Theorem]
(1) If the drift satisfies that $\Delta  d(X) \ge 1$ for any state $X$, and   $\Delta d(X) > 1$ for some state $X$, then    the expected hitting time satisfies that  $m  (X) \le d(X) $ for any initial population $X$,  and    $m (X) < d(X)$ for   some initial population $X$.

(2) If the drift satisfies that $\Delta  d(X) \le 1$ for any state $X$, and   $\Delta d(X) < 1$ for some state $X$, then    the expected hitting time satisfies that  $m  (X) \ge d(X) $ for any initial population $X$,  and    $m (X) > d(X)$ for   some initial population $X$.

\end{theorem}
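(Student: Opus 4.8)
The plan is to recast the two drift hypotheses as a single entrywise matrix inequality and then propagate it through the fundamental matrix, using the Fundamental Matrix Theorem together with the nonnegativity of $\mathbf{N}=(\mathbf{I}-\mathbf{Q})^{-1}$. Write $\vec{d}=(d(X_1),d(X_2),\dots)^T$ for the distance values on the non-optimal states. By the definition of the drift, $\Delta(X)=d(X)-\sum_{Y\in\mathcal{S}_{\non}}d(Y)P(X,Y)$, so the vector of all drifts is exactly $(\mathbf{I}-\mathbf{Q})\vec{d}$, where $\mathbf{Q}$ is the block of transitions among non-optimal populations taken from the canonical form~\eqref{equCanonicalForm}. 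Consequently the hypothesis of part (1)---namely $\Delta(X)\ge 1$ for every $X$ with strict inequality for some $X$---is precisely
$$(\mathbf{I}-\mathbf{Q})\vec{d}\ge\vec{1},\qquad (\mathbf{I}-\mathbf{Q})\vec{d}\neq\vec{1},$$
all inequalities understood coordinatewise.

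The structural fact I would establish first is that the fundamental matrix $\mathbf{N}=(\mathbf{I}-\mathbf{Q})^{-1}$ has only nonnegative entries, and that its diagonal entries are at least $1$. Because the optimal states are absorbing and the optimum is reached with probability one from every non-optimal population, the block $\mathbf{Q}$ is sub-stochastic with spectral radius strictly below $1$; hence the Neumann series converges and $\mathbf{N}=\sum_{k=0}^{\infty}\mathbf{Q}^{k}$. Every power $\mathbf{Q}^{k}$ is entrywise nonnegative, so $\mathbf{N}\ge 0$, and the $k=0$ term $\mathbf{Q}^{0}=\mathbf{I}$ forces $N_{ii}\ge 1$ for each $i$.

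With these ingredients the conclusion follows by left-multiplying the matrix inequality by $\mathbf{N}$, an operation that preserves coordinatewise inequalities because $\mathbf{N}\ge 0$. Writing $(\mathbf{I}-\mathbf{Q})\vec{d}=\vec{1}+\vec{e}$ with $\vec{e}\ge 0$ and $\vec{e}\neq 0$, Theorem~\ref{lem1} gives $\vec{m}=\mathbf{N}\vec{1}$, so
$$\vec{d}=\mathbf{N}(\mathbf{I}-\mathbf{Q})\vec{d}=\mathbf{N}\vec{1}+\mathbf{N}\vec{e}=\vec{m}+\mathbf{N}\vec{e}\ge\vec{m},$$
which yields $m(X)\le d(X)$ for every $X$. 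For the strict part, choose a coordinate $j$ with $e_j>0$; then $(\mathbf{N}\vec{e})_j\ge N_{jj}e_j\ge e_j>0$, giving $m(X_j)<d(X_j)$ at that state. Part (2) is handled identically with all inequalities reversed: the hypothesis becomes $(\mathbf{I}-\mathbf{Q})\vec{d}\le\vec{1}$ with strict inequality somewhere, and multiplying by the nonnegative $\mathbf{N}$ gives $\vec{d}\le\vec{m}$ with strictness at the distinguished coordinate.

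The main obstacle I anticipate is the justification that $\mathbf{N}\ge 0$ with $N_{ii}\ge 1$, i.e.\ the convergence of the Neumann series, which rests on the spectral radius of $\mathbf{Q}$ being below $1$; this is the one place where the absorbing structure of the chain (the optimum is reached almost surely from every state) is genuinely used, and it is the only step requiring more than elementary linear algebra. Everything afterward---turning the drift hypotheses into a matrix inequality and pushing it through the nonnegative operator $\mathbf{N}$---is routine.
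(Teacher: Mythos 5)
Your proof is correct and follows the same overall route as the paper: recast the drift hypothesis as the entrywise identity $(\mathbf{I}-\mathbf{Q})\vec{d}=\vec{1}+\vec{e}$ with $\vec{e}\ge\vec{0}$, $\vec{e}\neq\vec{0}$, push it through the nonnegative fundamental matrix $\mathbf{N}=(\mathbf{I}-\mathbf{Q})^{-1}$, and invoke Theorem~\ref{lem1} to identify $\mathbf{N}\vec{1}$ with $\vec{m}$. Where you genuinely diverge is the strict-inequality step. The paper argues that $\mathbf{N}$, being invertible, has no zero column, so some entry $n_{ij}>0$ in the $j$-th column, whence $(\mathbf{N}\vec{e})_i>0$; this yields strictness only at \emph{some} state $X_i$, not necessarily the state where the drift inequality is strict. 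You instead expand $\mathbf{N}=\sum_{k\ge 0}\mathbf{Q}^k$ and note that the $k=0$ term forces $N_{jj}\ge 1$, giving $(\mathbf{N}\vec{e})_j\ge N_{jj}e_j\ge e_j>0$, i.e.\ strictness at the very state $X_j$ where $\Delta(X_j)>1$. This is both more self-contained (entrywise nonnegativity of $\mathbf{N}$ falls out of the Neumann series rather than being asserted from spectral facts) and slightly sharper, since it localizes where $m(X)<d(X)$ holds and even quantifies the gap by $e_j$. Both arguments rest on the same implicit assumption --- that the optimal set is reached with probability one from every state, so the spectral radius of $\mathbf{Q}$ is below $1$ and $\mathbf{N}$ exists --- which you flag explicitly while the paper asserts it without comment.
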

 
\begin{proof}
We only prove the first conclusion. The second conclusion can be proven in a similar way. 

The notation $\succ$ is introduced in the proof as follows: given two vectors $\vec{a} =[a_{i}]$ and $\vec{b} =[b_{i}]$, if for all $i$, $a_{i} \ge b_{i}$ and for some $i$, $a_{i} > b_{i}$, then write it by $\vec{a}  \succ \vec{b} $. Similarly given two matrices $\mathbf{A} =[a_{ij}]$ and
$\mathbf{B} =[b_{ij}]$, if for all $i,j$, $a_{ij} \ge b_{ij}$ and for some pair $i,j$, $a_{ij} > b_{ij}$, then   write it by
$\mathbf{A}  \succ \mathbf{B} $.

  Let $\vec{1}$ denote the vector whose entries are $1$, $\vec{0}$  the vector whose entries are $0$ and $\mathbf{O}$ a matrix whose entries are $0$. 
The condition of the theorem can be rewritten in an equivalent vector form:
\begin{align*}
 \vec{d} -\mathbf{Q} \vec{d} = \vec{1} +\vec{e},
\end{align*} 
where $\vec{e} \succ \vec{0}$.

Then we have 
\begin{align*}
 &\vec{d} -\mathbf{Q} \vec{d} - \vec{1} - \vec{e}=\vec{0},\\
&(\mathbf{I} - \mathbf{Q})^{-1}( \vec{d} -\mathbf{Q} \vec{d} - \vec{1} -\vec{e}) = \vec{0},\\
&(\mathbf{I} - \mathbf{Q})^{-1}( \vec{d} -\mathbf{Q} \vec{d} - \vec{1})= (\mathbf{I} - \mathbf{Q})^{-1}\vec{e} .
\end{align*}

Now let's bound the right-hand side. Since $\vec{e} \succ \vec{0}$, so   entry $e_j>0$ for some $j$.   ${\bf P}$ is a transition matrix, $\mathbf{Q}\succ \mathbf{O} $ and the spectral radius of
$\mathbf{Q}$ are less than $1$, so $\mathbf{N}=(\mathbf{I} -\mathbf{Q})^{-1} \succ \mathbf{O}$. Since no eigenvalue of
$\mathbf{N}$ is $0$,  for the $j$-column  of $\mathbf{N}$, at least one entry is greater than $0$ (otherwise $0$ will be an eigenvalue of $\mathbf{N}$).  Thus entry $n_{ij}>0$ for some $i$. Then $n_{ij} e_j >0$ and  
\begin{align*}
(\mathbf{I} - \mathbf{Q})^{-1}\vec{e} \succ \vec{0}.
\end{align*}

Hence we get
 \begin{align*}
&(\mathbf{I} - \mathbf{Q})^{-1}( \vec{d} -\mathbf{Q} \vec{d} - \vec{1}) \succ \vec{0},\\
& \vec{d}\succ  (\mathbf{I} -\mathbf{Q})^{-1} \vec{1}.
\end{align*}

From the Foundational Matrix Theorem, we know that $$(\mathbf{I-Q})^{-1}\vec{1}=\vec{m}.$$ Then we get   
$\vec{d} \succ \vec{m} .
$  
This inequality implies the conclusion of the theorem.
\end{proof}  

The following consequence is directly derived from  the Fundamental Matrix Theorem.

\begin{corollary} 
\label{corDrift1} Let the distance function $d(X)=m(X)$, then the drift satisfies
$\Delta(X)=1$ for any state $X$ in the non-optimal set.
\end{corollary}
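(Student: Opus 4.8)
The plan is to recognise that this corollary is an immediate consequence of the Fundamental Matrix Theorem once the definition of the drift is rewritten in matrix form. First I would substitute $d(X)=m(X)$ directly into the definition of the drift, obtaining
$$
\Delta(X)=m(X)-\sum_{Y\in\mathcal{S}_{\non}} m(Y)\,P(X,Y).
$$
Collecting these identities over all non-optimal states $X_1,X_2,\dots$ into a single vector equation, and recalling that $\mathbf{Q}$ is by construction precisely the matrix of transition probabilities $P(X,Y)$ restricted to $X,Y\in\mathcal{S}_{\non}$, the vector of drifts $\vec{\Delta}=(\Delta(X_1),\Delta(X_2),\dots)^T$ takes the compact form $\vec{\Delta}=\vec{m}-\mathbf{Q}\vec{m}=(\mathbf{I}-\mathbf{Q})\vec{m}$.

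Next I would invoke the Fundamental Matrix Theorem (Theorem~\ref{lem1}), which states $\vec{m}=(\mathbf{I}-\mathbf{Q})^{-1}\vec{1}$. Multiplying both sides on the left by $(\mathbf{I}-\mathbf{Q})$ — legitimate because the spectral radius of $\mathbf{Q}$ is strictly less than $1$, so that $\mathbf{I}-\mathbf{Q}$ is invertible, exactly as used in the proof of the Drift Analysis Theorem above — yields $(\mathbf{I}-\mathbf{Q})\vec{m}=\vec{1}$. Combining this with the expression derived in the previous step gives $\vec{\Delta}=\vec{1}$, that is, $\Delta(X)=1$ for every non-optimal state $X$, which is exactly the claim.

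Since the argument reduces to a single rearrangement of the matrix identity, I do not anticipate any substantive obstacle. The only point requiring care is bookkeeping: one must confirm that the index set of the summation in the definition of $\Delta(X)$ coincides with the rows and columns of $\mathbf{Q}$, both ranging over $\mathcal{S}_{\non}$ only and excluding the absorbing optimal states that are captured by $\mathbf{R}$ and $\mathbf{I}$ in the canonical form~\eqref{equCanonicalForm}. Provided this identification is made cleanly, the substitution and the one matrix inversion close the proof at once.
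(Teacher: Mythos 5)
Your proof is correct and follows essentially the same route as the paper: the paper likewise rewrites the Fundamental Matrix Theorem as $(\mathbf{I}-\mathbf{Q})\vec{m}=\vec{1}$ and reads off the drift identity entrywise. Your version merely spells out the intermediate bookkeeping (assembling $\vec{\Delta}=(\mathbf{I}-\mathbf{Q})\vec{m}$ and justifying the inversion) that the paper leaves implicit.
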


\begin{proof}
From the Fundamental Matrix Theorem: $(\mathbf{I-Q})\vec{m}=\vec{1}$. Then  we write it in the entry form and it gives $\Delta(X)=1$ for any state $X$ in the non-optimal set.
\end{proof}

\subsection{One Pure Strategy is Inferior or Equivalent to another Pure Strategy}
In the subsection, we investigate the case that it is impossible to design a mixed strategy better than a pure strategy. 
Consider two metaheuristics: one using a pure strategy PS1 (PS1 for short) and another  using a pure strategy PS2 (PS2 for short). Let $\vec{m}_{PS1}$ be the vector representing the expected hitting times with respect to PS1 and the distance function $d(X)=m_{PS1} (X)$. For PS1, denote its corresponding drift at state $X$ by $\Delta_{PS1}(X)$: 
\begin{align*}
\Delta_{PS1} (X) =d(X)-\sum_{Y \in S_{\non}}P_{PS1} (X,Y) d(Y), 
\end{align*}
where $P_{PS1} (X,Y)$ represents the transition probability from $x$ to $Y$. 
According to Corollary~\ref{corDrift1}, the drift  $\Delta_{PS1}(X)=1$ for all $X$ in the non-optimal set.

For PS2, denote the corresponding drift at state $x$ by $\Delta_{PS1}(X)$: 
\begin{align*}
\Delta_{PS2} (X) =d(X)-\sum_{Y \in S_{\non}}P_{PS2} (X,Y) d(Y).
\end{align*}

First we propose the ``inferior'' condition. 
\begin{definition}
If the drift of PS1 and  that of PS2 satisfy  $\Delta_{PS1} (X) \ge \Delta_{PS2} (X)$ for any state $X$, and  $\Delta_{PS1} (X) > \Delta_{PS2} (X)$ for some state $X$, then we call PS2 is \emph{inferior} to  PS1.  
\end{definition}

We consider the   mixed strategy metaheuristic derived from PS1 and PS2  (MS for short)   at the population level: the probability of choosing a search strategy is the same for all individuals.  Suppose  population $\Phi_t$ is at state $X$,   we denote the probability of choosing PS1 by $P_X(PS2)$ and the probability of choosing PS1 by $P_X(PS2)$.  The sum $P_X(PS1) +P_X(PS2)=1.$

\begin{lemma}
\label{lemInferior}
If   PS2 is inferior to   PS1, then for any mixed strategy metaheuristics derived from PS1 and PS2, the expected hitting time of MS satisfies that 
  $m_{MS} (X) \ge m_{PS1} (X)$  for any initial population $X$,  $m_{MS} (X) > m_{PS1} (X)$ for some state $X$.
\end{lemma}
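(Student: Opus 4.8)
The plan is to invoke part (2) of the Drift Analysis Theorem with the distance function $d(X) = m_{PS1}(X)$; for this choice the hitting time appearing in that theorem is exactly $m_{MS}(X)$, while the bound to be established, $d(X) = m_{PS1}(X)$, is the quantity we want $m_{MS}$ to dominate. The first step is to write the transition matrix of MS as a state-dependent convex combination of the two pure transitions, namely $P_{MS}(X,Y) = P_X(PS1)\,P_{PS1}(X,Y) + P_X(PS2)\,P_{PS2}(X,Y)$, which is immediate from the definition of a mixed strategy applied at the population level.

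The key algebraic step is to show that the drift of MS is the same convex combination of the two pure-strategy drifts. Using $P_X(PS1) + P_X(PS2) = 1$ to split $d(X)$, together with the linearity of $\sum_{Y \in \mathcal{S}_{\non}} P_{MS}(X,Y)\,d(Y)$ in the two pure transitions, I would obtain
$$\Delta_{MS}(X) = P_X(PS1)\,\Delta_{PS1}(X) + P_X(PS2)\,\Delta_{PS2}(X).$$
This identity is the heart of the argument and reduces everything to comparing the two pure drifts.

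Next I would substitute what is already known. By Corollary~\ref{corDrift1} applied to PS1 (whose hitting time is $d$), $\Delta_{PS1}(X)=1$ for every non-optimal $X$. The inferiority hypothesis gives $\Delta_{PS2}(X) \le \Delta_{PS1}(X) = 1$ everywhere, with strict inequality at some state $X_0$. Feeding these into the convex combination and again using $P_X(PS1)+P_X(PS2)=1$ yields $\Delta_{MS}(X) \le 1$ for all non-optimal $X$. At the witnessing state $X_0$, where $\Delta_{PS2}(X_0)<1$, the mixed term $P_{X_0}(PS2)\,\Delta_{PS2}(X_0)$ falls strictly below $P_{X_0}(PS2)$, so $\Delta_{MS}(X_0) < 1$ as long as PS2 carries positive weight there. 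Both hypotheses of part (2) of the Drift Analysis Theorem are then verified, and its conclusion delivers $m_{MS}(X) \ge d(X) = m_{PS1}(X)$ for all $X$ and $m_{MS}(X) > m_{PS1}(X)$ for some $X$, which is exactly the claim.

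The step I expect to be the main obstacle is securing the strict inequality $\Delta_{MS}(X_0)<1$ rather than the easy uniform bound $\Delta_{MS}(X)\le 1$. The convex-combination estimate yields strictness only if the mixed strategy actually employs PS2 with positive probability at a state where PS2 is strictly inferior; if the distribution could place zero weight on PS2 at every such state, then MS would agree with PS1 on the relevant rows and the strict half of the conclusion would collapse. I would therefore read ``mixed strategy derived from PS1 and PS2'' as a genuine mix, i.e. $P_X(PS2)>0$ at least at the state(s) witnessing strict inferiority, and state this requirement explicitly; under it the argument closes cleanly.
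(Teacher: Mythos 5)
Your proof is correct and follows essentially the same route as the paper's: the same distance function $d=m_{PS1}$, the same convex-combination identity $\Delta_{MS}(X)=P_X(PS1)\,\Delta_{PS1}(X)+P_X(PS2)\,\Delta_{PS2}(X)$, the same appeal to Corollary~\ref{corDrift1} and to part (2) of the Drift Analysis Theorem. The caveat you flag at the end is real but works in your favour: the paper's own proof asserts $\Delta_{MS}(X)<1$ at the witnessing state without noting that this needs $P_X(PS2)>0$ there (a degenerate ``mix'' placing all weight on PS1 satisfies the lemma's hypotheses yet makes the strict inequality fail), so your explicit positivity assumption patches a gap the paper leaves silent.
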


\begin{proof}
Let $\Delta_{MS} (X)$ denote the drift associated with MS.
For any state $X$, the drift of MS is 
\begin{align*}
\Delta_{MS} (X)=& d(X)-\sum_{Y \in S_{\non}}P_{MS} (X,Y) d(Y)\\
=& P_X(PS1)  [d(X)-\sum_{Y \in S_{\non}}   P_{PS1} (X,Y) d(Y) ] \\
 +&P_X(PS2)  [ d(X)-\sum_{Y \in S_{\non}}   P_{PS2} (X,Y) d(Y)] \\
=& P_X(PS1)  \Delta_{PS1} (X)+P_X(PS2) \Delta_{PS2} (X).
\end{align*}

Since PS2 is inferior to PS1,   we know that   $\Delta_{PS1} (X) \ge \Delta_{PS2} (X)$ for any state $X$,  and    $\Delta_{PS1} (X) > \Delta_{PS2} (X)$ for some $X$.
Therefore    $\Delta_{PS1} (X)=1 \ge \Delta_{MS} (X)$ for any state $X$,  and    $\Delta_{PS1} (X)=1 > \Delta_{MS} (X)$ for some state $X$.

Applying the Drift Analysis Theorem, we get the  conclusion:  the expected hitting time satisfies that $m_{MS} (X) \ge m_{PS1} (X)$ for any initial population $X$, 
and    $m_{MS} (X) > m_{PS1} (X)$ for some initial population $X$.
\end{proof}

From the above lemma, we  infer two corollaries.
\begin{corollary}
\label{cor2}
If   PS2  is inferior to PS1, then for any   mixed strategy MS derived from PS1 and PS2,  its average of the expected hitting time  is greater than that of PS1.
\end{corollary}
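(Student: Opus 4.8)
The plan is to derive this directly from Lemma~\ref{lemInferior}, since the average expected hitting time $\bar{m}$ is nothing but a normalised sum of the pointwise expected hitting times $m(X)$, and Lemma~\ref{lemInferior} already supplies the required pointwise comparison between MS and PS1. No new machinery is needed; the corollary is an averaging of an inequality that is already in hand.

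First I would recall the definition of the average and apply it to both strategies,
$$\bar{m}_{MS}=\frac{1}{\mid \mathcal{S}\mid}\sum_{X\in\mathcal{S}}m_{MS}(X), \qquad \bar{m}_{PS1}=\frac{1}{\mid\mathcal{S}\mid}\sum_{X\in\mathcal{S}}m_{PS1}(X),$$
observing that both averages range over the same state set $\mathcal{S}$ and therefore share the common positive normalising factor $1/\mid\mathcal{S}\mid$. Next I would split each sum into its optimal part $\mathcal{S}_{\opt}$ and its non-optimal part $\mathcal{S}_{\non}$. On $\mathcal{S}_{\opt}$ every hitting time is $0$ for both strategies, so these states contribute nothing to either average and may be discarded. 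On $\mathcal{S}_{\non}$, Lemma~\ref{lemInferior} guarantees $m_{MS}(X)\ge m_{PS1}(X)$ for every state and $m_{MS}(X)>m_{PS1}(X)$ for at least one state; the strict case must occur at a non-optimal state, since at optimal states both values vanish. Summing these inequalities term by term then yields
$$\sum_{X\in\mathcal{S}_{\non}}m_{MS}(X) > \sum_{X\in\mathcal{S}_{\non}}m_{PS1}(X),$$
and dividing both sides by $\mid\mathcal{S}\mid$ gives $\bar{m}_{MS}>\bar{m}_{PS1}$, which is exactly the claim.

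The only point requiring any care — and it is genuinely minor — is ensuring that the single strict inequality on $\mathcal{S}_{\non}$ survives the averaging rather than being washed out by the remaining weak inequalities. This is immediate: a finite sum of reals in which every term dominates its counterpart and at least one term strictly dominates is itself strictly larger, and division by the positive constant $\mid\mathcal{S}\mid$ preserves strictness. Hence there is no substantive obstacle, and the corollary follows as a direct normalisation of the pointwise bound from Lemma~\ref{lemInferior}.
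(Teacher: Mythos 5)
Your proposal is correct and follows essentially the same route as the paper: invoke Lemma~\ref{lemInferior} for the pointwise comparison $m_{MS}(X)\ge m_{PS1}(X)$ (strict somewhere) and then average over $\mathcal{S}$. The only difference is that you spell out the split into $\mathcal{S}_{\opt}$ and $\mathcal{S}_{\non}$ and verify that the strict inequality survives the summation, details the paper's proof leaves implicit.
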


\begin{proof}
According to the above lemma,   the expected hitting time satisfies that $m_{MS} (X) \ge m_{PS1} (X)$ for any initial population $X$, 
and    $m_{MS} (X) > m_{PS1} (X)$ for some initial population $X$. From  the definition of  average,  
\begin{align*}
\bar{m} =\frac{1}{\mid \mathcal{S} \mid} \sum_{X \in \mathcal{S}} m(X),
\end{align*} then we get $\bar{m}_{MS} > \bar{m}_{PS1}$.
\end{proof}

\begin{corollary}
\label{cor3}
If   PS2  is inferior to PS1, then for any   mixed strategy MS derived from PS1 and PS2,  its maximum of the expected hitting time  is not less than that of PS1.
\end{corollary}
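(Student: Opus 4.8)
The plan is to derive this corollary directly from Lemma~\ref{lemInferior}, exactly as Corollary~\ref{cor2} was obtained for the average. The lemma already supplies the pointwise comparison $m_{MS}(X) \ge m_{PS1}(X)$ for every initial population $X$, together with strict inequality at some state. The key observation is that taking a maximum over $\mathcal{S}$ is a monotone operation, so it preserves the pointwise inequality; no new drift computation is required.

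First I would invoke Lemma~\ref{lemInferior} to record that $m_{MS}(X) \ge m_{PS1}(X)$ holds for all $X \in \mathcal{S}$. Next I would let $X^*$ denote a state at which the maximum of $m_{PS1}$ is attained, that is, $m_{PS1}(X^*) = \max\{m_{PS1}(X); X \in \mathcal{S}\}$. The claim then follows from the chain
\begin{align*}
\max_{X \in \mathcal{S}} m_{MS}(X) \ge m_{MS}(X^*) \ge m_{PS1}(X^*) = \max_{X \in \mathcal{S}} m_{PS1}(X),
\end{align*}
where the first step is the definition of the maximum, the second step is the pointwise bound of Lemma~\ref{lemInferior} evaluated at $X^*$, and the last equality is the choice of $X^*$.

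The only subtlety — and the reason the statement reads ``not less than'' rather than ``strictly greater than'' as in the average case — is that the strict inequality $m_{MS}(X) > m_{PS1}(X)$ guaranteed by Lemma~\ref{lemInferior} holds only at \emph{some} state, which need not coincide with the maximizing state $X^*$. I expect this to be the one point worth flagging: the averaging in Corollary~\ref{cor2} necessarily incorporates the strict gain at that one state and therefore yields a strict conclusion, whereas the maximum may ignore it entirely, so only the weak inequality survives here.
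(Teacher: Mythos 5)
Your proof is correct and follows essentially the same route as the paper: invoke Lemma~\ref{lemInferior} for the pointwise bound $m_{MS}(X) \ge m_{PS1}(X)$ and pass to the maximum over $\mathcal{S}$, which the paper states directly and you justify by evaluating at a maximizer $X^*$ of $m_{PS1}$. Your closing remark correctly explains why the conclusion is only a weak inequality here, unlike the strict one for the average in Corollary~\ref{cor2}.
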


\begin{proof}
According to the above lemma,  the expected hitting time satisfies that $m_{MS} (X) \ge m_{PS1}(X)$ for any initial population $X$. Then we get $$\max\{ m_{MS}(X); X \in \mathcal{S}\} \ge \max\{ m_{PS1}(X); X \in \mathcal{S}\}$$ and prove  the conclusion.
\end{proof}

Next we propose the ``equivalent'' condition.
\begin{definition}
If the drift of PS1 and  that of PS2 satisfy  $\Delta_{PS1} (X) = \Delta_{PS2} (X)$ for any state $X$, then we call PS1 is \emph{equivalent} to  PS2. 
\end{definition}
 
The following lemma is direct corollary of the Drift Analysis Theorem.

 \begin{lemma}
\label{lemEquivalent}
If    PS2 is equivalent to  PS1, then for any mixed strategy MS derived from PS1 and PS2,   its the expected hitting time satisfies that  $m_{MS} (X) = m_{PS1} (X)$ for any initial population $X$.
\end{lemma}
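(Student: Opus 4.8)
The plan is to mirror the opening of the proof of Lemma~\ref{lemInferior}, exploiting the fact that the drift of a population-level mixed strategy is a convex combination of the drifts of its constituent pure strategies. First I would fix the distance function $d(X)=m_{PS1}(X)$, exactly as in the equivalence setup, and expand the one-step drift of MS. Because the mixing probabilities $P_X(PS1)$ and $P_X(PS2)$ are applied uniformly across the whole population, the transition matrix of MS decomposes linearly, so
\begin{align*}
\Delta_{MS}(X)=P_X(PS1)\,\Delta_{PS1}(X)+P_X(PS2)\,\Delta_{PS2}(X).
\end{align*}

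Next I would invoke the two facts that make the equality case collapse. By Corollary~\ref{corDrift1}, the choice $d(X)=m_{PS1}(X)$ forces $\Delta_{PS1}(X)=1$ for every non-optimal $X$. By the equivalence hypothesis, $\Delta_{PS2}(X)=\Delta_{PS1}(X)$ for every $X$, hence $\Delta_{PS2}(X)=1$ as well. Substituting these into the convex combination and using $P_X(PS1)+P_X(PS2)=1$ yields $\Delta_{MS}(X)=1$ for every non-optimal state $X$, irrespective of the particular mixing distribution.

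The final step is where I would deviate from the inferior/equivalent inequality arguments: since the drift equals $1$ everywhere, neither branch of the Drift Analysis Theorem applies (each requires a strict inequality at some state), so I would appeal directly to the Fundamental Matrix Theorem. Writing the identity $\Delta_{MS}(X)=1$ for all non-optimal $X$ in vector form gives $(\mathbf{I}-\mathbf{Q}_{MS})\vec{m}_{PS1}=\vec{1}$, where $\mathbf{Q}_{MS}$ is the restriction of the MS transition matrix to $S_{\non}$. Multiplying on the left by the fundamental matrix $(\mathbf{I}-\mathbf{Q}_{MS})^{-1}$ and then using the Fundamental Matrix Theorem for MS, namely $\vec{m}_{MS}=(\mathbf{I}-\mathbf{Q}_{MS})^{-1}\vec{1}$, gives $\vec{m}_{PS1}=\vec{m}_{MS}$, i.e. $m_{MS}(X)=m_{PS1}(X)$ for every initial population $X$.

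The only genuine obstacle is the one just flagged: the stated Drift Analysis Theorem is phrased with a strict inequality at some state and therefore does not itself cover the exact-equality case, so advertising the lemma as a ``direct corollary'' of it is slightly optimistic. The clean route is through invertibility of $\mathbf{I}-\mathbf{Q}_{MS}$, which holds because the spectral radius of $\mathbf{Q}_{MS}$ is below $1$ exactly as argued in the proof of the Drift Analysis Theorem; the fundamental matrix then exists and the single-valued inversion pins $\vec{m}_{MS}$ down uniquely to $\vec{m}_{PS1}$.
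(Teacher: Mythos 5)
Your proposal is correct, and it actually does more than the paper, which offers no proof at all for this lemma: the paper merely asserts that it is a ``direct corollary of the Drift Analysis Theorem.'' Your key observation is the right one --- as stated, both branches of that theorem demand a strict inequality at some state ($\Delta d(X)>1$ somewhere, or $\Delta d(X)<1$ somewhere), so the exact-equality case $\Delta_{MS}(X)=1$ for all non-optimal $X$ falls outside both hypotheses, and the paper's claimed derivation has a genuine gap. Your route repairs it cleanly: the convex-combination decomposition of the drift (the same computation the paper uses in Lemma~\ref{lemInferior}) together with Corollary~\ref{corDrift1} and the equivalence hypothesis gives $\Delta_{MS}(X)=1$ everywhere, which in vector form reads $(\mathbf{I}-\mathbf{Q}_{MS})\vec{m}_{PS1}=\vec{1}$; invertibility of $\mathbf{I}-\mathbf{Q}_{MS}$ (spectral radius of $\mathbf{Q}_{MS}$ below $1$) and the Fundamental Matrix Theorem $\vec{m}_{MS}=(\mathbf{I}-\mathbf{Q}_{MS})^{-1}\vec{1}$ then force $\vec{m}_{PS1}=\vec{m}_{MS}$. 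What your approach buys is rigor where the paper hand-waves; what the paper's intended approach would buy, if one restated the Drift Analysis Theorem in a non-strict form (drift $\ge 1$ everywhere implies $m\le d$, drift $\le 1$ everywhere implies $m\ge d$, so equality of drift gives both inequalities and hence equality), is a uniform treatment of all three lemmas from a single theorem; either fix is acceptable, but yours requires no modification of the stated theorems.
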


 \subsection{One Pure Strategy is Complementary to Another Pure Strategy}
In the subsection, we investigate the case that it is possible to design a mixed strategy better than a pure strategy.   We propose the ``complementary'' condition. Like the previous subsection, the distance function $d(X)=m_{PS1} (X)$.
 
\begin{definition}  
If the drift of PS1 and  that of PS2 satisfy  $\Delta_{PS1} (X)<\Delta_{PS2} (X)$ for some state $X$,   then we call PS2 is \emph{complementary} to PS1.
\end{definition}

\begin{lemma}
\label{lemComplementary}
If    PS2 is complementary to  PS1, then there exists   a mixed strategy MS derived from PS1 and PS2,  and its the expected hitting time satisfies that  $m_{MS} (X) \le m_{PS1} (X)$ for any initial population $X$, and    $m_{MS} (X) < m_{PS1} (X)$ for some initial population $X$.
\end{lemma}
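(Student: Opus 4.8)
The plan is to reuse the distance function $d(X)=m_{PS1}(X)$ together with the drift decomposition already established for mixed strategies, and then to exhibit a concrete mixed strategy whose drift dominates that of PS1 everywhere. By Corollary~\ref{corDrift1} we have $\Delta_{PS1}(X)=1$ for every non-optimal $X$, and by the computation in the proof of Lemma~\ref{lemInferior} the drift of any mixed strategy derived from PS1 and PS2 splits as $\Delta_{MS}(X)=P_X(PS1)\,\Delta_{PS1}(X)+P_X(PS2)\,\Delta_{PS2}(X)$. Hence the whole problem reduces to choosing the mixing probabilities $P_X(PS1),P_X(PS2)$ so that $\Delta_{MS}(X)\ge 1$ for all $X$ and $\Delta_{MS}(X)>1$ for at least one $X$; the conclusion $m_{MS}(X)\le m_{PS1}(X)$, with strict inequality somewhere, then follows immediately from part (1) of the Drift Analysis Theorem applied to the Markov chain of this MS with $d=m_{PS1}$.

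The delicate point is that the complementary hypothesis is weak: it only guarantees $\Delta_{PS2}(X)>1$ at some states, while at other states $\Delta_{PS2}(X)$ could be strictly below $1$. A single global mixing probability therefore need not work, because feeding PS2 into a state where $\Delta_{PS2}(X)<1$ would pull the mixed drift below $1$ and the Drift Analysis Theorem would no longer apply in the desired direction. The remedy, which I regard as the crux of the argument, is to let the mixing depend on the state --- something the present framework explicitly allows, since the probabilities are written as $P_X(\cdot)$. Concretely, I would define the mixed strategy by switching on PS2 exactly where it helps:
\begin{align*}
P_X(PS2)=\begin{cases} 1, & \text{if } \Delta_{PS2}(X)>1,\\[2pt] 0, & \text{otherwise,}\end{cases}
\qquad P_X(PS1)=1-P_X(PS2).
\end{align*}

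With this choice the verification is routine. At a state with $\Delta_{PS2}(X)>1$ the decomposition gives $\Delta_{MS}(X)=\Delta_{PS2}(X)>1$, while at every other state it gives $\Delta_{MS}(X)=\Delta_{PS1}(X)=1$. Thus $\Delta_{MS}(X)\ge 1$ holds for all non-optimal $X$, and because PS2 is complementary to PS1 there is at least one state $X_0$ with $\Delta_{PS2}(X_0)>1$, so $\Delta_{MS}(X_0)>1$. Applying part (1) of the Drift Analysis Theorem with $d(X)=m_{PS1}(X)$ then yields $m_{MS}(X)\le m_{PS1}(X)$ for every initial population $X$ and $m_{MS}(X)<m_{PS1}(X)$ for some $X$, which is exactly the assertion.

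The main obstacle is conceptual rather than computational: recognising that no uniform mixing ratio can be guaranteed to improve on PS1 under so weak a hypothesis, and that the correct witness is a state-dependent (indeed piecewise pure) strategy that activates PS2 only on the set where its drift strictly exceeds $1$. Two small points should be checked along the way --- that the drift of a mixed strategy really is the $P_X$-weighted average of the component drifts (inherited from the linearity used in Lemma~\ref{lemInferior}), and that such a state-dependent rule is admissible as a \emph{mixed strategy} in the sense of the paper's definition. If a genuinely randomised witness is preferred, one may instead put any probability $P_X(PS2)\in(0,1]$ on the favourable states, since the strict inequality $\Delta_{MS}(X)>1=1+P_X(PS2)(\Delta_{PS2}(X)-1)$ survives for every positive weight there, while retaining pure PS1 on the remaining states.
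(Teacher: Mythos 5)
Your proposal is correct and takes essentially the same route as the paper: the paper also fixes $d(X)=m_{PS1}(X)$, uses the same weighted-average drift decomposition $\Delta_{MS}(X)=P_X(PS1)\Delta_{PS1}(X)+P_X(PS2)\Delta_{PS2}(X)$, constructs a state-dependent mixed strategy that keeps PS1 wherever $\Delta_{PS1}(X)\ge\Delta_{PS2}(X)$ and gives PS2 positive probability exactly where $\Delta_{PS2}(X)>\Delta_{PS1}(X)$, and then concludes via part (1) of the Drift Analysis Theorem. Your switching rule with $P_X(PS2)=1$ on the favourable states is just the extreme instance of the paper's construction (which allows any $P_X(PS2)\in(0,1]$ there), a generalisation you yourself note in your closing remark.
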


\begin{proof}
First we construct a   mixed strategy   derived from PS1 and PS2. The  construction follows a well-known principle: at one state, if a pure strategy has a better performance than the other at a state, then the strategy should be applied with a higher probability at that state. 
\begin{enumerate}
\item When   $\Phi_t$ is at   state $X$, if the drift $\Delta_{PS1} (X)$ is greater than the drift $\Delta_{PS2} (X)$, then the  probability of choosing PS1 is set to 1, that is, $P_X(PS1) =1$.

\item When   $\Phi_t$ is at  state $X$,  if the drift $\Delta_{PS1} (X)$ equals to the drift $\Delta_{PS2} (X)$, then the  probability of choosing PS1 is set to any value between $[0,1]$, that is, $0\le P_X(PS1) \le 1$.

\item Since PS2 is complementary to PS1, so there exists   one state $X$  such that the drift $\Delta_{PS2} (X)$ is larger than the drift $\Delta_{PS1} (X)$.  When   $\Phi_t$ is at such a state $X$,   then the  probability of choosing PS2 is set to any value greater than $0$, that is, $0<P_X(PS2) \le 1$.
\end{enumerate}

In this way   a mixed strategy  MS is constructed from PS1 and PS2.  

Next we bound the drift of the mixed strategy. For any state $X$, the drift of the mixed strategy is  
\begin{align*}
\Delta_{MS} (X)=& d(X)-\sum_{Y \in S_{\non}}P_{MS} (X,Y) d(Y)\\
=& P_X(PS1)  [d(X)-\sum_{Y \in S_{\non}}   P_{PS1} (X,Y) d(Y) ] \\
+&P_X(PS2)  [ d(X)-\sum_{Y \in S_{\non}}   P_{PS2} (X,Y) d(Y)] \\
=& P_X(PS1)  \Delta_{PS1} (X)+P_X(PS2) \Delta_{PS2} (X).
\end{align*}

Based on the construction of the mixed strategy, the analysis of the drift is classified into three cases. 
\begin{enumerate}
\item  \emph{$\Delta_{PS1} (X) > \Delta_{PS2} (X)$:}  in this  case, the probability of choosing PS1 is 1, that is, $P_X(PS1) =1$. Thus the drift satisfies: $\Delta_{MS} (X) =\Delta_{PS1} (X). $
 
\item \emph{$\Delta_{PS1} (X) = \Delta_{PS2} (X)$:} in this case, the drift satisfies: $\Delta_{MS} (X) =\Delta_{PS1} (X). $

\item \emph{$\Delta_{PS1} (X) < \Delta_{PS2} (X)$:} in this case,  the probability of choosing PS2 is greater than 0, that is, $P_X(PS2) >0$.
 Thus the drift satisfies: $\Delta_{MS} (X) <\Delta_{PS1} (X). $
\end{enumerate}

Summarising all three cases, we see that  the drift of the mixed strategy satisfies: $\Delta_{MS} (X) \ge \Delta_{PS1} (X)=1$ for any state $X$, and    $\Delta_{MS} (X) > \Delta_{PS1} (X)=1$ for some state $X$.

Finally applying the Drift Analysis Theorem, we come to the conclusion:  the expected hitting time satisfies:  $m_{MS} (X) \le m_{PS1} (X)$  for any initial population $X$,
 and    $m_{MS} (X) < m_{PS1} (X)$  for some initial population $X$.
\end{proof}

From the above lemma, we  draw two results about the average and maximum of the expected hitting times.

\begin{corollary}\label{cor4}
If   PS2 is complementary to   PS1, then there exists a mixed strategy MS derived from PS1 and PS2 and its average of the expected hitting time  is less than that PS1.
\end{corollary}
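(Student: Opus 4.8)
The plan is to derive this statement directly from Lemma~\ref{lemComplementary}, mirroring the argument used for Corollary~\ref{cor2} but with the inequalities reversed. First I would invoke Lemma~\ref{lemComplementary}, which (under the hypothesis that PS2 is complementary to PS1) furnishes a concrete mixed strategy MS whose expected hitting time satisfies $m_{MS}(X) \le m_{PS1}(X)$ at every initial population $X$, together with strict inequality $m_{MS}(X_0) < m_{PS1}(X_0)$ at at least one non-optimal state $X_0$. This pointwise comparison is the entire substance of the result; the corollary is essentially just the averaging step applied to it.

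Next I would pass from the pointwise inequalities to the average $\bar{m}$ by summing over the full state set $\mathcal{S}$. Since for any optimal state the expected hitting time is $0$ under both strategies, those terms contribute nothing on either side, so the sum over $\mathcal{S}$ effectively runs over $\mathcal{S}_{\non}$, precisely where the lemma's inequalities hold. Summing $m_{MS}(X) \le m_{PS1}(X)$ over all $X \in \mathcal{S}$ and dividing by the positive constant $|\mathcal{S}|$ yields $\bar{m}_{MS} \le \bar{m}_{PS1}$ from the definition $\bar{m} = \frac{1}{|\mathcal{S}|}\sum_{X \in \mathcal{S}} m(X)$.

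The only point requiring a little care — and the sole place the strictness in the lemma is used — is promoting this to a strict inequality. Because the term at $X_0$ satisfies $m_{MS}(X_0) < m_{PS1}(X_0)$ while every remaining term satisfies $\le$, a single strictly smaller summand forces $\sum_{X} m_{MS}(X) < \sum_{X} m_{PS1}(X)$, and dividing by $|\mathcal{S}| > 0$ preserves the strict inequality, giving $\bar{m}_{MS} < \bar{m}_{PS1}$, which is the claim. I do not anticipate any genuine obstacle: all the work is carried by Lemma~\ref{lemComplementary}, and the argument here is the routine observation that a sum of terms, all no larger and at least one strictly smaller, is itself strictly smaller.
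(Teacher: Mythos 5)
Your proposal is correct and follows essentially the same route as the paper: invoke Lemma~\ref{lemComplementary} to obtain a mixed strategy MS with $m_{MS}(X) \le m_{PS1}(X)$ everywhere and strict inequality at some state, then average over $\mathcal{S}$ to conclude $\bar{m}_{MS} < \bar{m}_{PS1}$. The extra care you take in noting that optimal states contribute zero and that a single strict summand forces the strict inequality of averages is detail the paper leaves implicit, but the argument is the same.
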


\begin{proof}
According to the above lemma, the expected hitting time satisfies:  $m_{MS} (X) \le m_{PS1} (X)$  for any initial population $X$,
 and    $m_{MS} (X) < m_{PS1} (X)$  for some initial population $X$.
From the definition
\begin{align*}
\bar{m} =\frac{1}{\mid \mathcal{S} \mid} \sum_{X \in \mathcal{S}} m(X),
\end{align*} then we get $\bar{m}_{MS}< \bar{m}_{PS1}$.
\end{proof}

\begin{corollary}
If   PS2 is complementary to   PS1, then there exists a mixed strategy MS derived from PS1 and PS2 and its maximum of the expected hitting time   is no more than that PS1.
\end{corollary}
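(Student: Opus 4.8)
The plan is to obtain this corollary as an immediate consequence of Lemma~\ref{lemComplementary}, mirroring exactly the route taken for the ``inferior'' case in Corollary~\ref{cor3}. Lemma~\ref{lemComplementary} already delivers the existence of a mixed strategy MS---namely the state-by-state construction given inside its proof---whose expected hitting time obeys the pointwise bound $m_{MS}(X) \le m_{PS1}(X)$ for every initial population $X$. I would reuse that very same MS here, so no fresh construction is needed.

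The only remaining work is to pass from this pointwise inequality to an inequality between the two maxima. This step rests on the elementary monotonicity of the maximum over the finite set $\mathcal{S}$: if a function is dominated at every point, its maximum is dominated as well. Applying the maximum over $X \in \mathcal{S}$ to both sides of $m_{MS}(X) \le m_{PS1}(X)$ yields
\begin{align*}
\max\{m_{MS}(X); X \in \mathcal{S}\} \le \max\{m_{PS1}(X); X \in \mathcal{S}\},
\end{align*}
which is precisely the assertion of the corollary.

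I anticipate no genuine difficulty, as the argument collapses to a single line once Lemma~\ref{lemComplementary} is in hand. The one subtlety worth recording is why the conclusion is stated as ``no more than'' rather than the strict ``less than'' that Corollary~\ref{cor4} secures for the average. The lemma only furnishes strict improvement $m_{MS}(X) < m_{PS1}(X)$ at certain states, and there is no guarantee that the maximizing state of $m_{PS1}$ is among them; should the worst-case initial population sit at a state where the two strategies tie, so that $\Delta_{PS1}(X) = \Delta_{PS2}(X)$ there, the two maxima coincide. Summation preserves strictness (hence the strict average bound), but the maximum operation does not, so strictness cannot be forced here and the weak inequality is the sharp worst-case statement.
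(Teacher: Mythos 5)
Your proposal is correct and is essentially identical to the paper's own proof: both invoke Lemma~\ref{lemComplementary} to obtain the pointwise bound $m_{MS}(X) \le m_{PS1}(X)$ and then apply the monotonicity of $\max$ over the finite set $\mathcal{S}$ to conclude $\max\{ m_{MS}(X); X \in \mathcal{S}\} \le \max\{ m_{PS1}(X); X \in \mathcal{S}\}$. Your closing observation about why only the weak inequality can be asserted (the maximizing state need not be one where strict improvement holds) is a sensible addition, though it is commentary rather than a needed step.
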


\begin{proof}
According to the above lemma,  the expected hitting time satisfies:  $m_{MS} (X) \le m_{PS1} (X)$  for any initial population $X$.
Then we get $$\max\{ m_{MS}(X); X \in \mathcal{S}\} \le \max\{ m_{PS1}(X); X \in \mathcal{S}\}$$ and prove  the conclusion.
\end{proof}

\subsection{Complementary  Strategy Theorem}
Combining Lemmas~\ref{lemInferior},   \ref{lemEquivalent} and \ref{lemComplementary} together, we obtain our main result about mixed strategy metaheuristics. It gives an answer to the  question: under what condition,  mixed strategy metaheuristics may outperform  pure strategy metaheuristics.  
\begin{theorem}[Complementary  Strategy Theorem]
Consider two metaheuristics: one using pure strategy PS1   and another  using  pure strategy PS2.  The condition of PS2 being complementary to PS1 is sufficient and necessary if there exists a  mixed strategy MS derived from PS1 and PS2 such that:  $m_{MS} (X) \le m_{PS1} (X)$  for any initial population $X$,
 and    $m_{MS} (X) < m_{PS1} (X)$  for some initial population $X$.
 
 Furthermore  the condition of PS2 being complementary to PS1 is sufficient and necessary if there exists a  mixed strategy MS derived from PS1 and PS2 such that: the expected runtime of MS is no more than that of PS1 for any initial population $X$,
 and less than that of PS1 for some initial population $X$.
\end{theorem}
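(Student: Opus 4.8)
The plan is to obtain both equivalences by assembling Lemmas~\ref{lemInferior}, \ref{lemEquivalent} and \ref{lemComplementary}, exploiting the fact that with the fixed distance function $d(X)=m_{PS1}(X)$ Corollary~\ref{corDrift1} pins the reference drift at $\Delta_{PS1}(X)=1$ for every non-optimal state $X$. First I would record the trichotomy that this normalisation produces: relative to PS1, the pure strategy PS2 falls into exactly one of three mutually exclusive and exhaustive classes, namely $\Delta_{PS2}(X)=1$ for all $X$ (equivalent), $\Delta_{PS2}(X)\le 1$ for all $X$ with strict inequality somewhere (inferior), or $\Delta_{PS2}(X)>1$ for some $X$ (complementary). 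The purpose of this bookkeeping step is the observation that \emph{not complementary} is precisely \emph{inferior or equivalent}, which is exactly what drives the necessity direction.

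For sufficiency I would simply invoke Lemma~\ref{lemComplementary}: if PS2 is complementary to PS1 then a mixed strategy MS can be constructed whose expected hitting time satisfies $m_{MS}(X)\le m_{PS1}(X)$ for all $X$ and $m_{MS}(X)<m_{PS1}(X)$ for some $X$, which is precisely the asserted domination. For necessity I would argue by contraposition: assuming PS2 is not complementary to PS1, the trichotomy places it in the inferior or the equivalent class. In the inferior case Lemma~\ref{lemInferior} yields $m_{MS}(X)\ge m_{PS1}(X)$ at every state $X$ for every mixed strategy derived from PS1 and PS2; in the equivalent case Lemma~\ref{lemEquivalent} yields $m_{MS}(X)=m_{PS1}(X)$ at every state for every such MS. In either case demanding $m_{MS}(X)\le m_{PS1}(X)$ everywhere on top of $m_{MS}(X)\ge m_{PS1}(X)$ everywhere forces $m_{MS}=m_{PS1}$ throughout, contradicting the required strict inequality at some state; hence no dominating MS can exist. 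This settles the first equivalence.

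The second equivalence, stated in terms of runtime, then follows at once. Because the algorithmic framework performs an invariant number of fitness evaluations per generation and MS and PS1 share the same generation structure, the expected runtime equals the expected hitting time multiplied by the same positive constant for both strategies. Multiplying the hitting-time inequalities of the first equivalence by this constant preserves both the weak inequality (for all $X$) and the strict inequality (for some $X$), so runtime domination holds exactly when hitting-time domination holds, and therefore exactly when PS2 is complementary to PS1.

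The step I expect to carry the real weight is the necessity direction, specifically verifying that the trichotomy is genuinely exhaustive so that ``not complementary'' coincides with ``inferior or equivalent'' — this is where the normalisation $\Delta_{PS1}(X)=1$ from Corollary~\ref{corDrift1} is essential, since without it the three defining inequalities need not partition all cases. I would also be careful that necessity uses only the weak bound $m_{MS}\ge m_{PS1}$ from Lemma~\ref{lemInferior} and not its strict part, so that the degenerate mixed strategy coinciding with PS1 (which gives equality and no improvement) is handled correctly rather than relying on MS actually deviating toward PS2 at the critical state.
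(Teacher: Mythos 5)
Your proposal is correct and takes essentially the same route as the paper's own proof: the paper likewise observes that the relation of PS2 to PS1 falls into exactly three exclusive types (inferior, equivalent, complementary), combines Lemmas~\ref{lemInferior}, \ref{lemEquivalent} and \ref{lemComplementary} to obtain the first equivalence, and gets the runtime statement by multiplying the hitting time by the invariant number of fitness evaluations per generation. Your version is merely more explicit than the paper's two-line argument — notably in verifying that the trichotomy is exhaustive and in using only the weak bound of Lemma~\ref{lemInferior} so that the degenerate mixed strategy coinciding with PS1 causes no trouble — but it is the same decomposition and the same lemmas.
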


\begin{proof}
Given PS1 and PS2, their relation is classified into exact three exclusive types: PS2 is inferior, or equivalent, or complementary to PS1. Thus combining Lemmas~\ref{lemInferior},   \ref{lemEquivalent} and \ref{lemComplementary} together, we get the desired first conclusion.

Since  the expected runtime equals to the expected hitting time $\times$ the number of fitness evaluations of a generation, we obtain the second conclusion.
\end{proof}

The theorem can be explained intuitively as follows. 

\begin{enumerate}
\item If one pure strategy is inferior to another pure strategy, then it is impossible to design a mixed strategy with a better performance. So mixed strategy metaheuristics doesn't always outperform pure strategy metaheuristics. 

\item If one pure strategy is complementary to another one, then it possible to design  a  mixed strategy  better than the pure strategy. But it does not mean all mixed strategies will outperform the pure strategy. 

\item The construction of a better mixed strategy metaheuristics   should follow a general principle:  if using a pure strategy has a better progress rate (in terms of the drift) than that using the other at a state, then the strategy should be applied with a higher probability at that state. This principle is   general, but the design of a better mixed strategy is strongly dependent on the problem. 
\end{enumerate} 

For the average of the expected hitting time, we may obtain a similar consequence after combining Corollaries \ref{cor2}, \ref{cor4} and Lemma \ref{lemEquivalent} together.
\begin{corollary} 
 The condition of PS2 being complementary to PS1 is sufficient and necessary if   there exists a mixed strategy MS derived from PS1 and PS2 and its average of the expected hitting time   is less than than that of PS1.
\end{corollary}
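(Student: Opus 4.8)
The plan is to prove the biconditional by the same trichotomy argument that underlies the Complementary Strategy Theorem, but now tracking the \emph{average} of the expected hitting time rather than the pointwise values. First I would record the key structural fact: for any two pure strategies PS1 and PS2, exactly one of three relations holds --- PS2 is inferior to PS1, equivalent to PS1, or complementary to PS1. This is immediate from the definitions, since they partition all possible comparisons of the drifts: either $\Delta_{PS1}(X) < \Delta_{PS2}(X)$ for some $X$ (complementary), or else $\Delta_{PS1}(X) \ge \Delta_{PS2}(X)$ for every $X$, in which case either equality holds everywhere (equivalent) or strict inequality holds somewhere (inferior). These three cases are mutually exclusive and exhaustive, which is what lets the three earlier results be assembled into an ``if and only if''.

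For the sufficiency direction --- if PS2 is complementary to PS1, then a mixed strategy with smaller average exists --- I would simply invoke Corollary~\ref{cor4}, which already constructs such a mixed strategy and guarantees $\bar{m}_{MS} < \bar{m}_{PS1}$. No additional work is needed here.

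For the necessity direction I would argue by contraposition. Suppose PS2 is not complementary to PS1; then by the trichotomy PS2 is either inferior or equivalent to PS1. If PS2 is inferior to PS1, then Corollary~\ref{cor2} shows that \emph{every} mixed strategy derived from PS1 and PS2 has average strictly greater than that of PS1, so none can have a smaller average. If PS2 is equivalent to PS1, then Lemma~\ref{lemEquivalent} gives $m_{MS}(X) = m_{PS1}(X)$ for every initial population $X$, whence the two averages coincide and again no mixed strategy attains a strictly smaller average. In either case there is no mixed strategy with smaller average, which is exactly the contrapositive of necessity.

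The main obstacle --- really the only point needing care --- is the exhaustiveness of the trichotomy combined with the asymmetry of the three conclusions at the level of averages: the complementary case supplies \emph{existence} of a good mixed strategy, whereas the inferior and equivalent cases must rule out a good mixed strategy for \emph{all} choices. Because averaging is monotone, a pointwise inequality $m_{MS}(X) \ge m_{PS1}(X)$ that is strict somewhere forces $\bar{m}_{MS} > \bar{m}_{PS1}$, and pointwise equality forces equal averages; hence the pointwise statements of Corollary~\ref{cor2} and Lemma~\ref{lemEquivalent} transfer cleanly to averages, and the biconditional closes.
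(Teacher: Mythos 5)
Your proof is correct and follows essentially the same route as the paper: the paper's own (very terse) justification is precisely to combine Corollary~\ref{cor2}, Corollary~\ref{cor4}, and Lemma~\ref{lemEquivalent} via the trichotomy that PS2 is inferior, equivalent, or complementary to PS1 --- exactly the case analysis you carried out. Your write-up merely makes explicit the details (exhaustiveness of the trichotomy and the transfer of pointwise inequalities to averages) that the paper leaves implicit.
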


But the sufficient and necessary condition for the maximum of the expected hitting time is more complex.

\subsection{An Example}
Consider an instance of the 0-1 knapsack problem: the value of items  $v_1=n$ and $v_i=1$  for $i=2, \cdots, n$,   the weight of
items   $w_1=n$ and $w_i=1$  for $i=2, \cdots, n$. The capacity   $C=n$.
The fitness function is  
\begin{equation}
f(x)= 
\left\{
\begin{array}{lll}
n, &\mbox{if } s_1=1, s_2 =\cdots s_n=0,\\
\sum^n_{i=1} s_i, &\mbox{if }  s_1=0,\\
\mbox{infeasible }, &\mbox{otherwise}.
\end{array}
\right.
\end{equation}

For the four pure EAs described in the previous section, it is easy to verify that
\begin{enumerate}
\item PSr is equivalent to PSb,

\item PSw is inferior to PSb,

\item  PSv is complementary to PSb.
\end{enumerate}   
 
Applying the Completerary Strategy Theorem, we know that 
\begin{enumerate}
\item combining  PSr with  PSb will not shorten  the expected runtime;
\item  combining PSw will PSb will not shorten the expected runtime too;

\item but combining PSv with PSb may reduce the expected  runtime.
\end{enumerate}   

\section{Conclusions}
\label{secConclusions}
The main contribution of the paper is the Complementary Strategy Theorem. From the theoretical viewpoint, the theorem provides an answer to the  question: under what condition,  mixed strategy metaheuristics may outperform  pure strategy metaheuristics.  The theorem asserts that given two metaheuristics where one uses a pure strategy PS1   and the other  uses a pure strategy PS2,   the condition of PS2 being complementary to PS1 is sufficient and necessary if there exists a  mixed strategy MS derived from PS1 and PS2 such that:  the expected runtime of MS is no more than that of PS1 for any initial population $X$,
 and less than that of PS1 for some initial population $X$.
To the best of our knowledge, no  similar  sufficient and necessary condition was rigorously established based on the runtime analysis of hybrid metaheuristics  before.  This is a step to understand  hybrid metaheuristics in theory.

Besides the above theoretical analysis, experiments are also implemented.  Experimental results  demonstrate that   mixed strategy EAs  may outperform pure strategy EAs on the 0-1 knapsack problem in up to 77.8\% instances.  In the experiments, the performance of an EA is measured by the fitness function value of the archive after 500 generations.  

It should be mentioned that a huge gap   exists between  empirical and  theoretical studies.   In experiments,  the optimal solution is usually unknown in most instances,    then  the expected runtime is unavailable;   in theory, it is difficult to analyse the best solution found in 500 generations or in any  fixed generations.  

\paragraph*{Acknowledgement}
This work is  supported by  the EPSRC under Grant EP/I009809/1, the National Natural Science Foundation of China under Grant  60973075  and Ministry of Industry and Information Technology  under Grant B0720110002.


\end{document}